\newcommand{\InSubmitVer}[1]{}%
\newcommand{\InFullVer}[1]{#1}%
\newcommand{\InSubmitVer}[1]{#1}%
\newcommand{\InFullVer}[1]{}%
\newcommand*\circled[1]{\footnotesize\tikz[baseline=(char.base)]{
    \node[shape=circle,draw,inner sep=0.2pt] (char) {#1};}}
\numberwithin{figure}{section}%
\numberwithin{table}{section}%
\numberwithin{equation}{section}%
\renewcommand{\Re}{\mathbb{R}}%
\theoremstyle{plain}%
\newtheorem{theorem}{Theorem}[section]
\newtheorem{lemma}[theorem]{Lemma}%
\theoremstyle{plain}%
\newtheorem*{remark:unnumbered}[theorem]{Remark}%
\newcommand{\HLinkShort}[2]{\hyperref[#2]{#1\ref*{#2}}}
\newcommand{\HLink}[2]{\hyperref[#2]{#1~\ref*{#2}}}
\newcommand{\HLinkPage}[2]{\hyperref[#2]{#1~\ref*{#2}%
      $_\text{p\pageref{#2}}$}}
\newcommand{\HLinkPageOnly}[1]{\hyperref[#1]{Page~\refpage*{#1}%
      $_\text{p\pageref{#1}}$}}
\newcommand{\HLinkSuffix}[3]{\hyperref[#2]{#1\ref*{#2}{#3}}}
\newcommand{\HLinkPageSuffix}[3]{\hyperref[#2]{#1\ref*{#2}%
      #3$_\text{p\pageref{#2}}$}}
\newcommand{\lemlab}[1]{\label{lemma:#1}}
\newcommand{\lemref}[1]{\HLink{Lemma}{lemma:#1}}%
\providecommand{\eqlab}[1]{}%
\renewcommand{\eqlab}[1]{\label{equation:#1}}
\newcommand{\Eqref}[1]{\HLinkSuffix{Eq.~(}{equation:#1}{)}}
\newcommand{\myqedsymbol}{\rule{2mm}{2mm}}
\theoremstyle{nonumberplain} \theoremseparator{}
\newtheorem{proof}{Proof:}
\newcommand{\pbrc}[1]{\mleft[ {#1}  \mright]}
\definecolor{blue25}{rgb}{0, 0, 11}%
\newcommand{\emphic}[2]{%
   \textcolor{blue25}{%
      \textbf{\emph{#1}}}%
   \index{#2}}
\newcommand{\emphi}[1]{\emphic{#1}{#1}}
\newcommand{\diamX}[1]{\mathrm{diam}\pth{#1}}%
\renewcommand{\th}{th\xspace}
\newcommand{\atgen}{\symbol{'100}}
\newcommand{\SarielThanks}[1]{\thanks{Department of Computer Science;
      University of Illinois; 201 N. Goodwin Avenue; Urbana, IL,
      61801, USA; {\tt sariel\atgen{}illinois.edu}; {\tt
         \url{http://sarielhp.org/}.} #1}}
\newcommand{\ceil}[1]{\left\lceil {#1} \right\rceil}
\newcommand{\etal}{\textit{et~al.}\xspace}
\newcommand{\pth}[1]{\mleft({#1}\mright)}%
\newcommand{\CHChar}{{\Math{\mathcal{C}}}}
\newcommand{\CHX}[1]{\CHChar_{#1}}
\newcommand{\distSet}[2]{\Math{d}\pth{#1, #2}}
\newcommand{\norm}[1]{\left\| {#1}  \right\|}
\newcommand{\distY}[2]{\norm{#1 - #2}}
\newcommand{\pnt}{\Math{{p}}}%
\newcommand{\PntSet}{\Math{P}}%
\newcommand{\ds}{\displaystyle}%
\newcommand{\bpnt}{\mathsf{b}}%
\newcommand{\wpnt}{\mathsf{w}}%
\newcommand{\slab}{S}%
\newcommand{\slabY}[2]{\mathrm{slab}\pth{#1, #2}}
\newcommand{\Math}[1]{{{#1}}}%
\newcommand{\BPntSet}{\mathsf{B}}%
\newcommand{\WPntSet}{\mathsf{W}}%
\newcommand{\diam}{\Math{\Delta}}%
\newcommand{\margin}{\Math{\gamma}}%
\newcommand{\eps}{\varepsilon}
\newcommand{\remove}[1]{}%
\newcommand{\SarielComp}[1]{}
\newcommand{\NotSarielComp}[1]{#1}%
\newcommand{\SarielComp}[1]{#1}%
\newcommand{\NotSarielComp}[1]{}%
       \renewcommand{\Math}[1]{{\textcolor{red}{#1}}}
\begin{document}

\title{A Simple Algorithm for Maximum Margin Classification,
   Revisited}

\author{%
   Sariel Har-Peled\SarielThanks{Work on this paper was partially
      supported by a NSF AF awards %
      CCF-1421231, and 
      CCF-1217462.  
   }%
}

\date{\today}%

\maketitle%

\setfnsymbol{stars}%

\begin{abstract}
    In this note, we revisit the algorithm of Har-Peled
    \etal~\cite{hrz-mmcan-07} for computing a linear maximum margin
    classifier. Our presentation is self contained, and the algorithm
    itself is slightly simpler than the original algorithm. The
    algorithm itself is a simple Perceptron like iterative algorithm.
    For more details and background, the reader is referred to the
    original paper.
\end{abstract}

\section{Active learning, sparsity and large margin}

Let $\PntSet$ be a point set of $n$ points in $\Re^d$. Every point has
a label/color (say \emph{black} or \emph{white}), but we do not know
the labels.  In particular, let $\BPntSet$ and $\WPntSet$ be the set
of black and white points in $\PntSet$. Furthermore, let
$\diam = \diamX{\PntSet}$, and assume that there exist two parallel
hyperplanes $h, h'$ in distance $\margin$ from each other, such that
the slab between $h$ and $h'$ does not contain an point of $\PntSet$,
and the points of $\BPntSet$ are on one side of this slab, and the
points of $\WPntSet$ are on the other side. The quantity $\margin$ is
the \emphi{margin} of $\PntSet$.

A somewhat more convenient way to handle such slabs, is to consider
two points $\bpnt$ and $\wpnt$ in $\Re^d$. Let $\slabY{\bpnt}{\wpnt}$
be the region of points in $\Re^d$, such that their projection onto
the line spanned by $\bpnt$ and $\wpnt$ is contained in the open
segment $\bpnt\wpnt$. We use $(1-\eps)\slabY{\bpnt}{\wpnt}$ to denote
the slab formed from $\slabY{\bpnt}{\wpnt}$ by shrinking it by a
factor of $(1-\eps)$ around its middle hyperplane. Formally, it is
defined as $(1-\eps)\slabY{\bpnt}{\wpnt} = \slabY{\bpnt'}{\wpnt'}$,
where $\bpnt' = (1-\eps/2)\bpnt +(\eps/2) \wpnt$ and
$\wpnt' = (\eps/2)\bpnt +(1-\eps/2) \wpnt$.

In the following, we assume have an access to a \emphi{labeling
   oracle} that can return the label of a specific query
point. Similarly, we assume access to a \emphi{counterexample oracle},
such that given a slab that does not contain any points of $\PntSet$
in its interior, and supposedly separates the points of $\PntSet$ into
$\BPntSet$ and $\WPntSet$, it returns a point that is mislabeled by
this classifier (i.e., slab) if such a point exists.

Conceptually, asking queries from the oracles is quite expensive, and
the algorithm tries to minimize the number of such queries.

\paragraph{The algorithm.}
Assume there are two points $\bpnt_1 \in \BPntSet$ and
$\wpnt_1 \in \WPntSet$.  For $i > 0$, in the $i$\th iteration, the
algorithm considers the slab
\begin{math}
    \slab_i = (1-\eps)\slabY{\bpnt_{i}}{\wpnt_{i}}.
\end{math}
There are two possibilities:
\begin{compactenum}[\quad(A)]
    \item If the slab $\slab_i$ contains no points of $\PntSet$, then
    the algorithm uses the counterexample oracle to check if it is
    done -- that is, all the points are classified
    correctly. Otherwise, a badly classified point $\pnt_i$ was
    returned.
    
    \item The $\slab_i$ contains some points of $\PntSet$, and let
    $\pnt_i$ be the closest point to the middle hyperplane of the slab
    $\slab_i$. The algorithm uses the labeling oracle to get the label
    of $\pnt_i$.
\end{compactenum}
Assume that the label of $\pnt_i$ is white. Then, the algorithm set
$\wpnt_{i+1}$ be the projection of $\bpnt_{i}$ to $\wpnt_{i} \pnt_i$,
and $\bpnt_{i+1} = \bpnt_{i}$ (the case that $\pnt_i$ is black is
handled in a symmetric fashion).

\begin{lemma}[\cite{hrz-mmcan-07}]
    \lemlab{active}%
    Let $\PntSet$ be a set of points in $\Re^d$, with diameter
    $\diam$. Assume there is an unknown partition of $\PntSet$ into
    two (unknown) point sets $\BPntSet$ and $\WPntSet$, of white and
    black points, respectively, and this partition has margin
    $\margin$. Furthermore, we are given an access to a labeling and
    counterexample oracles. Finally, there are two given points
    $\bpnt_1 \in \BPntSet$ and $\wpnt_1 \in \WPntSet$.

    Then, for any $\eps > 0$, one can compute using an iterative
    algorithm, in $I = O\pth{\Bigl. \pth{\diam /\margin}^2 /\eps^2 }$
    iterations and in $O(I dn )$ time, a slab of width
    $\geq (1-\eps)\margin$ that separates $\BPntSet$ from
    $\WPntSet$. This algorithm performs $I$ calls to the
    labeling/counterexample oracles.
\end{lemma}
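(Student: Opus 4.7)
The plan is a Perceptron-style argument that tracks the potential $\Phi_i = \|\bpnt_i - \wpnt_i\|^2 = d_i^2$ and shows it drops by a quantifiable amount each iteration while remaining bounded below by $\margin^2$. The key geometric identity is Pythagorean: when (say) $\pnt_i$ is white and $\wpnt_{i+1}$ is the orthogonal projection of $\bpnt_i$ onto the line through $\wpnt_i$ and $\pnt_i$, the vector $\bpnt_i - \wpnt_{i+1}$ is perpendicular to $\wpnt_{i+1} - \wpnt_i$, which gives $d_{i+1}^2 = d_i^2 - r_i^2$ with $r_i = \|\wpnt_{i+1} - \wpnt_i\|$.

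To lower-bound $r_i$, let $\hat{u} = (\bpnt_i - \wpnt_i)/d_i$ and $t = (\pnt_i - \wpnt_i) \cdot \hat{u}$; an elementary computation gives $r_i = d_i\, t /\|\pnt_i - \wpnt_i\|$. Now in Case~(B), the point $\pnt_i$ lies inside the slab $\slab_i$, whose $\wpnt_i$-side boundary is at $\hat{u}$-distance $\eps d_i/2$ from $\wpnt_i$, so $t \geq \eps d_i/2$; and in Case~(A), the white counterexample $\pnt_i$ lies past the $\bpnt_i$-side of $\slab_i$, giving the even stronger $t \geq (1-\eps/2) d_i$. I would then maintain the invariant that $\bpnt_i \in \mathrm{conv}(\BPntSet)$ and $\wpnt_i \in \mathrm{conv}(\WPntSet)$, which is preserved by the update (a convex combination of $\wpnt_i$ and $\pnt_i$, clipped to $\pnt_i$ in the overshoot case). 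The invariant simultaneously yields $\|\pnt_i - \wpnt_i\| \leq \diam$ (the conv-hull diameter equals $\diam$) and $d_i \geq \margin$ (since the two convex hulls are separated by the true slab of width $\margin$). Combining, $r_i \geq \eps d_i^2/(2\diam)$ and hence $r_i^2 \geq \eps^2 d_i^4/(4\diam^2)$.

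Substituting yields $\Phi_{i+1} \leq \Phi_i(1 - \eps^2 \Phi_i/(4\diam^2))$; taking reciprocals via $1/(1-x) \geq 1+x$ gives the telescoping bound $1/\Phi_{i+1} \geq 1/\Phi_i + \eps^2/(4\diam^2)$, so after $I$ iterations $1/\margin^2 \geq 1/\Phi_I \geq I\eps^2/(4\diam^2)$, forcing $I = O(\diam^2/(\eps^2 \margin^2))$; the output slab $\slab_i$ at termination has width $(1-\eps) d_i \geq (1-\eps) \margin$ as required. The step I expect to be trickiest is the overshoot case, where the foot of the perpendicular from $\bpnt_i$ falls past $\pnt_i$ and we clip $\wpnt_{i+1}$ to $\pnt_i$: here Pythagoras no longer applies directly, but a direct expansion of $\|\bpnt_i - \pnt_i\|^2 = d_i^2 - 2 d_i t + \|\pnt_i - \wpnt_i\|^2$ combined with the overshoot condition $d_i t > \|\pnt_i - \wpnt_i\|^2$ shows the potential drops by at least $\eps d_i^2/2$, which dominates the required $\eps^2 d_i^4/(4\diam^2)$ whenever $d_i \leq \diam$. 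Finally, each iteration needs $O(dn)$ time (dot products against all points of $\PntSet$ to locate the one closest to the middle hyperplane of $\slab_i$) plus one oracle call, giving the claimed total of $O(Idn)$ time and $I$ oracle queries.
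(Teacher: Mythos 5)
Your argument is correct, and while the per-iteration geometry coincides with the paper's (your Pythagorean identity $d_{i+1}^2=d_i^2-r_i^2$ with $r_i=d_i t/\|\pnt_i-\wpnt_i\|$ is exactly the paper's $\ell_{i+1}=\ell_i\sin\alpha$ in squared form, and $t\geq \eps d_i/2$ is the paper's lower bound on $\distY{\pnt_i'}{\wpnt_i}$), you finish the proof by a genuinely different route in two respects. First, instead of the paper's epoch argument (grouping iterations into phases where $\ell_i$ halves and observing the last phase dominates), you take reciprocals of the potential and telescope, getting $1/\margin^2\geq 1/\Phi_I\geq I\eps^2/(4\diam^2)$ in one line; this is cleaner, gives an explicit constant, and dispenses with the epoch bookkeeping. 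Second, and more substantively, you explicitly handle the overshoot case in which the foot of the perpendicular from $\bpnt_i$ falls beyond $\pnt_i$ and the update must be clipped to $\wpnt_{i+1}=\pnt_i$. The paper's proof tacitly uses both $\ell_{i+1}=\ell_i\sin\alpha$ (projection onto the whole line) and $\wpnt_i\in\CHX{\WPntSet}$ (which is what guarantees $\ell_i\geq\margin$), and these two are in tension precisely in the overshoot case: projecting onto the line can leave the hull and can even drive $\ell_{i+1}$ below $\margin$ (e.g.\ $\wpnt_i=(0,0)$, $\bpnt_i=(10,0)$, white $\pnt_i=(9,2)$: the distance from $\bpnt_i$ to the line through $\wpnt_i$ and $\pnt_i$ is $20/\sqrt{85}\approx 2.17$, below the margin $\sqrt{5}\approx 2.24$ of this instance), whereas clipping preserves the hull invariant but invalidates the $\sin\alpha$ identity. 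Your direct expansion showing that the clipped step still decreases the potential by at least $\eps d_i^2/2\geq \eps^2 d_i^4/(4\diam^2)$ reconciles the two and closes a gap the paper glosses over. The remaining ingredients --- the hull invariant yielding $d_i\geq\margin$ and $\|\pnt_i-\wpnt_i\|\leq\diam$, the stronger bound $t\geq(1-\eps/2)d_i$ for counterexample points, the $O(dn)$ work plus one oracle call per iteration, and the terminal slab width $(1-\eps)d_i\geq(1-\eps)\margin$ --- match the paper's.
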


\begin{proof}   
    Our purpose is to analyze the number of iterations of this
    algorithm till it terminates.  So, let
    $\ell_i = \distY{\bpnt_i}{\wpnt_i}$. Clearly,
    \begin{math}
        \diam \geq \ell_0 \geq \ell_1 \geq \cdots \geq \margin,
    \end{math}
    the last step follows as $\bpnt_i \in \CHX{\BPntSet}$ and
    $\wpnt_i \in \CHX{\WPntSet}$, and the distance
    $\distSet{\CHX{\BPntSet}}{\CHX{\WPntSet}} \geq \margin$, where
    $\distSet{X}{Y} = \min_{x\in X} \min_ {y \in Y} \distY{x}{y}$.

    \parpic[r]{\includegraphics{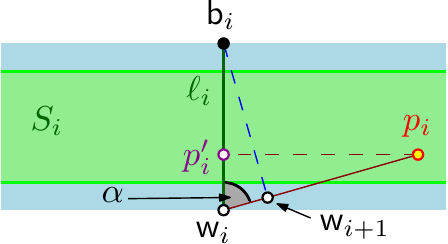}}%
    Let $\pnt_i'$ be the projection of $\pnt_i$ to the line spanned by
    $\wpnt_i \bpnt_i$. Observe that if $\pnt_i \in \slab_i$ then
    \begin{math}
        \distY{\pnt_i'}{\wpnt_i} \geq \eps \ell_i / 2.
    \end{math}
    Formally, the points $\wpnt_i$ breaks the line spanned by
    $\wpnt_i$ and $\bpnt_i$ into two parts, and $\bpnt_i$ and
    $\pnt_i'$ are on the same side, and $\pnt_i'$ is distance at least
    $\ell_i/2$ away from $\wpnt_i$ along this ray. Observe that if
    case (B) above happened, then $\pnt_i$ is not inside $\slab_i$,
    and this distance is significantly larger.

    Setting $\alpha = \angle \pnt_i \wpnt_i \bpnt_i$, we have
    \begin{math}%
        \ds %
        \cos \alpha%
        =%
        \frac{\distY{\pnt_i'}{\wpnt_i}}{\distY{\wpnt_i}{\pnt_i}}%
        \geq%
        \frac{\eps \ell_i /2}{\diam}.
    \end{math}
    As such, we have
    \begin{align}
        \ds%
        \ell_{i+1}%
        =%
        \ell_i \sin \alpha%
        \leq %
        \ell_i \sqrt{1 - \pth{\frac{\eps \ell_i}{2 \diam}}^2 }%
        \leq %
        \pth{1 - \pth{\frac{\eps \ell_i}{4\diam}}^{\!2} } \ell_i.%
    \end{align}

    We have that $\ell_{i+k} \leq \ell_i/2$, for
    $k = \ceil{64 {\Bigl. \diam^2 /(\eps\ell_i)^2 }}$. Indeed, if
    $\ell_{i+k} > \ell_i/2$, then
    \begin{align}
        \ell_{i+k}%
        &\leq%
        \ell_i%
        \prod_{j=0}^{k-1} \pth{%
           1- \pth{ \frac{\eps \ell_{i+j}}{4\diam} }^{\!2} }%
        \leq%
        \ell_i%
        \prod_{j=0}^{k-1} \pth{%
           1- \pth{ \frac{\eps \ell_{i+k}}{4\diam} }^{\!2} }%
        \leq%
        \ell_i%
        \exp \pth{%
           - k \pth{ \frac{\eps \ell_{i+k}}{4\diam} }^{\!2} }%
        \\%
        &\leq%
        \ell_i%
        \exp \pth{%
           - k \pth{ \frac{\eps \ell_{i}}{8\diam} }^{\!2} }%
        \leq%
        \ell_i%
        \exp \pth{ - k \pth{ \frac{\eps \ell_{i}}{8\diam} }^{\!2} }%
        \leq%
        \frac{\ell_i}{e},
        \eqlab{you:do:not:say}%
    \end{align}
    which is a contradiction.

    In particular, the $j$\th epoch of the algorithm are the
    iterations where $\ell_i \in \pbrc{\diam/2^{j-1}, \diam/ 2^{j}}$.
    Namely, during an epoch the width of the current slab shrinks by a
    factor of two. By \Eqref{you:do:not:say}, the $j$\th epoch lasts
    \begin{math}
        n_j = O\pth{ \pth{2^j/\eps}^{2}}
    \end{math}
    iterations. As such, the total number of iterations $\sum_j n_j$
    is dominated by the last epoch, that starts (roughly) when
    $\ell_i \leq 2\margin$, and end when it hits $\margin$. This last
    epoch takes $O\pth{\Bigl. \diam^2 /(\eps\margin)^2 }$ iterations,
    which also bounds the total number of iterations.
\end{proof}

\begin{remark:unnumbered}
    (A) if the data is already labeled, then the algorithm of
    \lemref{active} can be implemented directly resulting in the same
    running time as stated. This algorithm approximates the maximum
    margin classifier to the data.  Specifically, the above algorithm
    $(1+\eps)$-approximates the distance
    $\distSet{\BPntSet}{\WPntSet}$, and it can be interpreted as an
    approximation algorithm for the associated quadratic program.

    (B) One can implement the counterexample oracle, by sampling
    enough labels, and using the labeling oracle. This is introduces a
    certain level of error. See \cite{hrz-mmcan-07} for details.
\end{remark:unnumbered}



\end{document}